\newtheorem{theorem}{Theorem}
\newtheorem{lemma}{Lemma}
\newtheorem{hyp}{Hypothesis}
\newproof{proof}{Proof}
\journal{COMPUTERS \& OPERATIONS RESEARCH}
\begin{document}

\begin{frontmatter}


\title{Improved Quick Hypervolume Algorithm}



\author{Andrzej Jaszkiewicz}

\address{Poznan University of Technology, Faculty of Computing, Institute of Computing Science, ul. Piotrowo 2, 60-965 Poznan, andrzej.jaszkiewicz@put.poznan.pl}

\begin{abstract}
In this paper, we present a significant improvement of the Quick Hypervolume algorithm, one of the state-of-the-art algorithms for calculating the exact hypervolume of the space dominated by a set of d-dimensional points. This value is often used as the quality indicator in the multiobjective evolutionary algorithms and other multiobjective metaheuristics and the efficiency of calculating this indicator is of crucial importance especially in the case of large sets or many dimensional objective spaces. We use a similar divide and conquer scheme as in the original Quick Hypervolume algorithm, but in our algorithm we split the problem into smaller sub-problems in a different way. Through both theoretical analysis and a computational study we show that our approach improves the computational complexity of the algorithm and practical running times.
\end{abstract}

\begin{keyword}
Multiobjective optimization \sep Hypervolume indicator \sep Computational complexity analysis
\end{keyword}

\end{frontmatter}


\section{Introduction}

In this paper, we consider the problem of calculating the exact hypervolume of the space dominated by a set of d-dimensional points. This hypervolume is often used as the quality indicator in the multiobjective evolutionary algorithms (MOEAs) and other multiobjective metaheuristics (MOMHs), where the set of points corresponds to images in the objective space of the solutions generated by a MOMH. Multiple quality indicators have been proposed in the literature, however, the hypervolume indicator has the advantage of being compatible with the comparison of approximation sets based on the dominance relation (see \cite{Zitzler2003} for details) and is one of the most often used indicators. The hypervolume indicator may be used a posteriori to evaluate the final set of solutions generated by a MOMH e.g. for the purpose of a computational experiment comparing different algorithms or to tune parameters of a MOMH. Some authors proposed also indicator-based MOMHs that use the hypervolume to guide the work of the algorithms \cite{Zitzler2004,Jiang2015}.  

The exact calculation of the hypervolume may become, however, computationally demanding especially in the case of large sets in many dimensional objective spaces. Thus the exact calculation of the hypervolume obtained a significant interest from the research community \cite{Russo2014,Russo2016,Beume2009,ISI:While2012,Lacour2017347,Chan2013}. According to the recent study of Lacour et al. \cite{Lacour2017347} the state-of-the-art algorithms for the exact calculation of the hypervolume are Quick Hypervolume (QHV) \cite{Russo2014,Russo2016}, Hypervolume Box Decomposition Algorithm (HBDA) \cite{Lacour2017347} and Walking Fish Group algorithm (WFG) \cite{While2102}. From the theoretical point of view the currently most efficient algorithm in the case $d \geq 4$ in terms of the worst case complexity ($\mathcal{O}(n^\frac{d}{3})\text{polygon}(n)$) is by Chan \cite{Chan2013}. To our knowledge, there is currently, however, no available implementation of this approach and no evidence of its practical efficiency \cite{Lacour2017347}.

In this paper, we improve QHV algorithm by modifying the way the problem is split into smaller sub-problems. This modification although may seem relatively simple significantly improves the computational complexity of the algorithm and practical running times. Since our work is based on relatively recently published results we do not give in this paper an extended overview of the applications of the hypervolume indicator and the algorithms for the hypervolume calculation. Instead we refer an interested reader to \cite{Russo2014,Russo2016,Beume2009, ISI:While2012,Lacour2017347,While2102} for recent overviews of this area.

The paper is organized in the following way. In the next section, we define the problem of the hypervolume calculation. In section 3, the improved Quick Hypervolume (QHV-II) algorithm is proposed. The computational complexity of QHV-II algorithm is analyzed and compared to QHV in section 4. In section 5, a computational study is presented. The paper finishes with conclusions and directions for further research.

\section{Problem formulation}
Consider a d-dimensional space $\mathbb{R}^d$ that will be interpreted as the space of $d$ maximized objectives.  

We say that a point $s^1 \in \mathbb{R}^d$ \emph{dominates} a point $s^2 \in \mathbb{R}^d$ if, and only if, $s^1_j \geq s^2_j\  \forall \, j \in \{1,\ldots,d\} \wedge \exists \, j \in \{1,\ldots,d\}: s^1_j > s^2_j$. We denote this relation by $s^1 \succ s^2$.

We will consider \emph{hypercuboids} in $\mathbb{R}^d$ parallel to the axes, defined by two extreme points $r_* \in \mathbb{R}^d$ and $r^* \in \mathbb{R}^d$ such that $H(r^*, r_*) = \{s \in \mathbb{R}^d\ \mid \forall \, j \in \{1,\ldots,d\} \, \, {r_*}_j \leq s_j \leq r^*_j \}$.

Consider a finite set of points $S \subset H(r^*, r_*)$. The hypervolume of the space dominated by $S$ within hypercuboid $H(r^*, r_*)$, denoted by $\mathcal{H}(S, H(r^*, r_*))$ is the Lebesgue measure of the set $\bigcup\limits_{s \in S} H(s, r_*)$. The introduction of $r^*$ may seem superfluous since it does not influence the hypervolume, however, such definition will facilitate further explanation of the algorithms which are based on the idea of splitting the original problem into sub-problems corresponding to smaller hypercuboids.

\section{Quick Hypervolume II algorithm}
In this section we propose a modification of the QHV algorithm proposed by Russo and Francisco \cite{Russo2014,Russo2016}. We call this modified algorithm QHV-II. Both QHV and QHV-II are based on the following observations:
\begin{enumerate}
\item $\forall \, s' \in S \quad \mathcal{H}(S, H(r^*, r_*)) = \mathcal{H}(s', r_*) + \mathcal{H}\Big(\big(\bigcup\limits_{s \in S \setminus \{s'\}} H(s, r_*)\big) \setminus H(s', r_*)\Big)$, i.e. the hypervolume of the space dominated by $S$ is equal to the hypervolume of the hypercuboid defined by a single point $s' \in S$ and $r_*$, i.e. $\mathcal{H}(s', r_*)$, plus the hypervolume of the area dominated by the remaining points, i.e. $S \setminus \{s'\}$ excluding the area of hypercuboid $H(s', r_*)$.
\item The region $H(r^*, r_*) \setminus H(s', r_*)$ may be defined as a union of non-overlapping hypercuboids $\{H_1,...,H_L\}$.
\item Consider a point $s^1 \notin H(r^*, r_*) \, \wedge \, s^1 \succ r_* $. The hypervolume of the space dominated by $s^1$ within $H(r^*, r_*)$ is equal to the hypervolume of the space dominated by the projection of $s^1$ onto $H(r^*, r_*)$. The projection means that the coordinates of the projected point larger than the corresponding coordinates of $r^*$ are replaced by the corresponding coordinates of $r^*$. 
\end{enumerate}

The above observations immediately suggest the possibility of calculating the hypervolume in a recursive manner with Algorithm~\ref{algoGQHV}. The algorithm selects a pivot point, calculates the hypervolume of the area dominated by the pivot point, and then splits the problem of calculating the remaining hypervolume into a number of sub-problems. If the number of points is sufficiently small it uses simple geometric properties to calculate the hypervolume.

\begin{algorithm}[!ht]
\caption{\texttt{General QHV}}\label{algoGQHV}
\begin{algorithmic}
\STATE Parameters $\downarrow$: $H(r^*, r_*), S \subset H(r^*, r_*)$ 
\vspace*{1\baselineskip}
\IF {$S$ contains one or two points}
\STATE Calculate $\mathcal{H}(S, H(r^*, r_*))$ using simple geometric properties
\STATE $HyperVolume \leftarrow \mathcal{H}(S, H(r^*, r_*))$ 
\ELSE
\STATE Select a pivot point $s' \in S$
\STATE $HyperVolume \leftarrow \mathcal{H}(s', r_*)$ 
\STATE Split $H(r^*, r_*) \setminus H(s', r_*)$ into a set of non-overlapping hypercuboids $\{H_1,\dots,H_L\}$.
\FORALL {$H_l \in \{H_1,\dots,H_L$\}}
\STATE Construct set $S_l$ containing the points dominating $r^l_*$ and if necessary projected onto $H_l$
\STATE $HyperVolume \leftarrow HyperVolume + \texttt{QHV}(H_l, S_l)$
\ENDFOR
\ENDIF
\RETURN $HyperVolume$
\end{algorithmic}
\end{algorithm}

Russo and Francisco \cite{Russo2014,Russo2016} propose to split the region $H(r^*, r_*) \setminus H(s', r_*)$ into $2^d - 2$ hypercuboids corresponding to each possible combination of the comparisons on each objective, where a coordinate may be $<$ or $\geq$ than the corresponding coordinate of the pivot point $s'$, with the exception of the two combinations corresponding to the areas dominated and dominating $s'$. Each such hypercuboid may be defined by a binary vector where $0$ at $j$-th position means that $s_j < s'_j$ and $1$ at $j$-th position means that $s_j \geq s'_j$. We will call such hypercuboids \emph{basic hypercuboids}.

We propose a different splitting scheme. We split the region $H(r^*, r_*) \setminus H(s', r_*)$ into $d$ hypercuboids defined in the following way:
\begin{itemize}
\item $H_1$ is defined by the condition $s_1 \geq s'_1$
\item \dots
\item $H_j$ is defined by the conditions $s_l < s'_l \quad \forall \, l=1,\dots,j-1 \land s_j \geq s'_j$
\item \dots
\end{itemize}
In other words, the hypercuboids are defined not by binary vectors but by the following schemata of binary vectors:
\begin{itemize}
\item $v_1 = 1*\dots*$
\item \dots
\item $v_i = 0\dots01*\dots*$, with $1$ at $j$-th position
\item \dots
\item $v_d = 0\dots01$
\end{itemize}
where $*$ means any symbol either $0$ or $1$. Each of these hypercuboids is obviously an union of a number of the basic hypercuboids. 

The difference between the splitting schemes in QHV and QHV-II is graphically illustrated in Figure \ref{fig:AlgComp} for the 3-objective case. In this case, there are 6 basic hypercuboids. The colors describe the hypercuboids corresponding to the different sub-problems. Please note, that in the case of QHV-II the hypercuboid defined by the condition $s_1 \geq s'_1$ contains also the region dominating $s'$ but this region does not contain any points. In addition, the arrows indicate the directions of projections of the points onto the hypercuboids.

\begin{figure}[htb]
\centering
\hspace*{1.5em}\raisebox{\dimexpr-.5\height-1em}
  {QHV}%
\hspace*{13.5em}\raisebox{\dimexpr-.5\height-1em}
  {QHV-II}%
\\[\medskipamount]
\hspace*{1.5em}\raisebox{\dimexpr-.5\height-1em}
  {\includegraphics[scale=0.45]{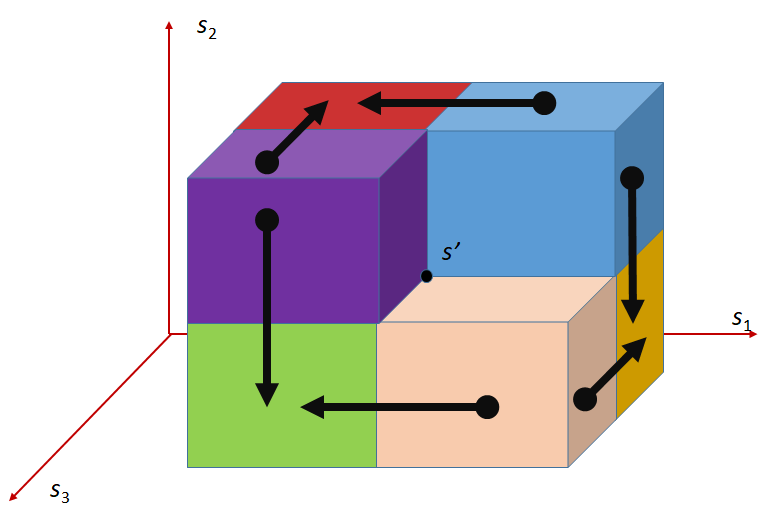}}%
\hspace*{1.5em}\raisebox{\dimexpr-.5\height-1em}
  {\includegraphics[scale=0.45]{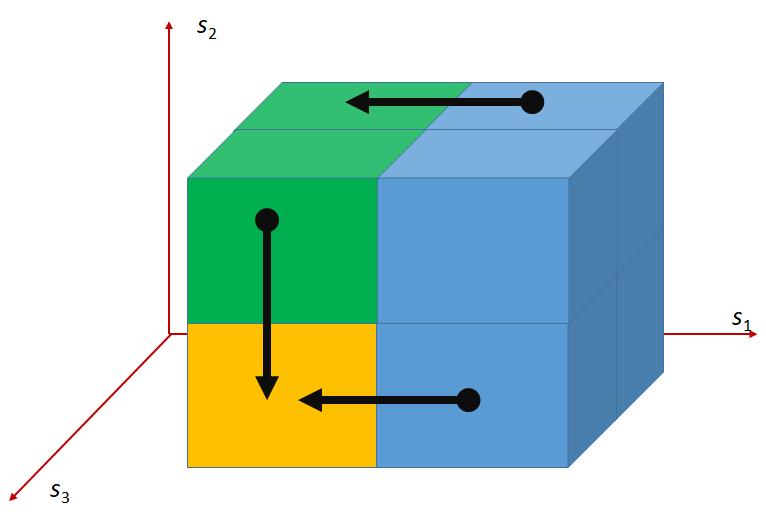}}%
\\[\medskipamount]
\hspace*{1.5em}\raisebox{\dimexpr-.5\height-1em}
  {$d=10$}%
\hspace*{15.5em}\raisebox{\dimexpr-.5\height-1em}
  {$   $}%
\medskip

\caption{The comparison of the splitting schemes in QHV and QHV-II}
\label{fig:AlgComp}
\end{figure}

Alike proposed in \cite{Russo2014,Russo2016} as the pivot point we select the point $s' \in S$ with the maximum $\mathcal{H}(s', r_*)$. 

Please note, that the points projected onto a hypercuboid $H_l$ may become dominated since some coordinates are replaced with lower values. Russo and Francisco \cite{Russo2014,Russo2016} propose to explicitly remove the dominated points e.g. with the algorithm proposed in \cite{Bentley1980}. We do not use this step in QHV-II since we did not find it practically beneficial in the preliminary computational experiments. Note, however, that alike Russo and Francisco \cite{Russo2014,Russo2016} we implemented only a naive method for the removal of the dominated points by comparing all pairs of points. This could perhaps be improved by using more advanced methods. Please note, however, that the pivot point $s'$ selected in the way described above is guaranteed to be non-dominated within $S$. Furthermore, while assigning points to sets $S_l$ each point is compared to $s'$ and the points dominated by $s'$ may be removed. It does not guarantee an immediate removal of all dominated points but finally all dominated points will be removed by the algorithm because each of the dominated points will be dominated by one of the selected pivots or eliminated while using the simple geometric properties when the number of points is sufficiently small.

Please also note, that as suggested in \cite{Russo2014,Russo2016} the projected points do not need to be constructed explicitly, but their coordinates may be calculated on demand to reduce the memory requirements.

\section{Theoretical Analysis} \label{TheorAnal}

The analysis of the computational complexity of such complex algorithms like QHV or QHV-II is very difficult. Russo and Francisco \cite{Russo2014} present a quite complex analysis of the computational complexity of QHV algorithm. However, the authors needed to make some strong assumptions like the uniform distribution of points on a hyperplane or the surface of a hypersphere to prove some main results. Furthermore their analysis in general shows that for large number of points, most of the projected points become dominated and may be removed. Thus the time needed for the removal of the dominated points has the main influence on the asymptotic behavior of the algorithm. Realistic data sets are, however, much smaller and the projected non-dominated points significantly influence running times. Indeed in our theoretical analysis we show that in the best case when all projected points are dominated and removed QHV and QHV-II have the same computational complexity. However, if all or a fraction of the projected points are preserved QHV-II starts to outperform QHV.

In this paper we use more standard tools like the recurrence solving and the Akra-Bazzi method \cite{Akra1998} to analyze the computational complexity of QHV-II and QHV is some specific cases. We present results for the worst and the best cases. Since it is very difficult to analyze the average case behavior of such complex algorithms we analyze a number of specific intermediate cases for which we can obtain the complexity of QHV and QHV-II.


\subsection{Worst case}

To analyze the worst case we follow the approach proposed in \cite{Russo2014}. The worst case is when all points except of the pivot point are assigned to each sub-problem. In this case, we can discard the time of removal of the dominated points and use the relation:
\begin{equation}
T(n) = dT(n-1)
\end{equation}
where $T(n)$ is the numbers of the comparisons needed to process a set of $n$ points. Solving the above recurrence:
\begin{equation}
T(n) = cd^{n-1}
\end{equation}
where $c$ is a constant. Thus QHV-II has the worst case time complexity $\mathcal{O}(d^{n-1})$

For the original QHV algorithm we get:
\begin{equation}
T(n) = (2^d-2)T(n-1)
\end{equation}
Solving the above recurrence:
\begin{equation}
T(n) = c(2^d-2)^{n-1}
\end{equation}
where $c$ is a constant. Thus QHV has the worst case time complexity $\mathcal{O}(2^{d(n-1)})$ which is worse than the worst case time complexity of QHV-II.


\subsection{Best case}

In the best case points are distributed uniformly in all basic hypercuboids and all projected points become dominated and are removed. So, each of the $2^d-2$ basic hypercuboids contains $n/(2^d-2)$ points.

To analyze the complexity of the algorithm we use the Akra-Bazzi method \cite{Akra1998} for the analysis of the divide and conquer algorithms of the form:
\begin{equation}
T(n)=
\begin{cases}
      T_0, & \text{if}\ n \leq n_0 \\
      \sum_{k=1}^K \, a_k T(n/b_k) + g(n), & \text{otherwise}
\end{cases}
\end{equation}
where $K$ is a constant, $a_k > 0$ and $b_k > 1$ are constants for all k, $g(n) = \Omega(n^c)$, and $g(n) = \mathcal{O}(n^d)$ for some constants $0 < c \leq d$. In this case:
\begin{equation}
T(n)=\theta \big( n^{p^o}(1 + \int_{1}^{n}\frac{g(u)}{u^{{p^o}+1}}du) \big)
\end{equation}
where $p^o$ is the unique real solution to the equation:
\begin{equation} \label{eq:p}
\sum_{k=1}^K \, \frac{a_k}{{b_k}^p} = 1
\end{equation}

\subsubsection{QHV-II}

In QHV-II the problem is split into $d$ sub-problems. The $k$-th sub-problem corresponds to the schema with $d-k$ positions that can be either $0$ or $1$. Thus there are $2^{d-k}$ binary vectors and thus $2^{d-k}$ basic hypercuboids directly (i.e. without projection) assigned to the $k$-th sub-problem, for $k=2,\dots,d$. For $k=1$ we have $2^{d-1}-1$ directly assigned basic hypercuboids because we discard the basic hypercuboid corresponding to the binary vector 11\dots1 and containing points dominating the pivot point.

In this case, $K = d$, $a_k = 1$, $b_k = (2^d-2)/(2^{d-k})$, $k=2,\dots,d$, $b_1 = (2^d-2)/(2^{d-1}-1)$. Equation \ref{eq:p} has the solution $p^o = 1$. If we discard the removal of the dominated points (i.e. assume that all of them where removed through the comparison to the pivot point) then $g(n) = n$ and: 
\begin{equation}
\begin{split}
T(n)=\theta \big( n(1 + \int_{1}^{n}\frac{u}{u^2}du) \big) = \Theta(n\log n)
\end{split}
\end{equation}

The dominated points can be removed in $\Theta(n \log^{d-2}n)$ time \cite{Russo2014,Bentley1980}. So if we take into account the time needed for the removal of the dominated points and the time needed to compare each point to the pivot point then $g(n)=n + \Theta(n \log^{d-2}n) = \Theta(n \log^{d-2}n)$ and we get
\begin{equation}
\begin{split}
T(n)=\theta \big( n(1 + \int_{1}^{n}\frac{\Theta(u\log^{d-2}u)}{u^2}du) \big) = \\ 
= \Theta(n \log^{d-1}n)
\end{split}
\end{equation}


\subsubsection{QHV}

In the same way we may analyze the best case behavior of the original QHV algorithm. In QHV the problem is split into $d-1$ classes of equivalent sub-problems. The $k$-th class contains $\binom{d}{k}$  sub-problems with binary vectors having $k$ symbols $1$. Each subproblem contains just one basic hypercuboid.

In this case, $K=d-1$, $a_k = \binom{d}{k}$, $b_k = 2^d-2$. In this case, we also get $p^o = 1$ and the same time $T(n)= \Theta(n\log n)$ or $T(n)= \Theta(n \log^{d-1}n)$, without or with the explicit removal of the dominated points, respectively. So, in the best case the time complexity of QHV and QHV-II is the same.


\subsection{Intermediate case}
Consider the case when the points are equally distributed in all basic hypercuboids but no projected points are removed. In other words each sub-problem will contain all points belonging to its hypercuboid and all projected points.

\subsubsection{QHV-II}

In this case, $a_k = 1$ and $b_k = 2$ since each sub-problem will contain half of the points better on one objective than the pivot point. Equation \ref{eq:p} has the solution $p^o = \log_2 d$. Since we can discard the removal of the dominated points $g(n) = n$ and

\begin{equation}
\begin{split}
T(n)=\theta \big( n^{\log_2 d}(1 + \int_{1}^{n}\frac{u}{u^{log_2 d + 1}}du) \big) = \\
= \Theta \big( n^{\log_2 d}(1 + \Theta(n^{1-log_2 d})) \big) = \Theta(n^{\log_2 d})
\end{split}
\end{equation}


Furthermore, since the points dominated by the pivot point may be removed without any additional comparisons QHV-II has time complexity $\mathcal{O}(n^{\log_2 d})$ with the considered distribution of points.

\subsubsection{QHV}

In the case of the original QHV algorithm $a_k = \binom{d}{k}$, $b_k = (2^d-2) / (2^{d-k} - 1)$, since there are $(2^{d-k} - 2)$ other basic hypercuboids projected onto a basic hypercuboid with $k$ symbols $1$. 

Unfortunately, we were not able to solve analytically equation \ref{eq:p} in this case. In Table \ref{tab:p} we present values of $p^o$ obtained numerically for different numbers of objectives $d$. However, we can consider an approximate model assuming that $n$ points are uniformly distributed among all basic hypercuboids including the two hypercuboids dominated and dominating the pivot point $s'$. For larger values of $d$ the differences between the exact and the approximate models become very small as illustrated in Table \ref{tab:p} and since the approximate model differs in only two among $2^d$ basic hypercuboids it asymptotically converges to the exact model with growing $d$. Please note, that although Table \ref{tab:p} suggest that $p^o$ is always higher for the approximate model we do not have a formal proof of this fact. For the approximate model $a_k = \binom{d}{k}$, $k=1,\dots,d$, $b_k = (2^d) / (2^{d-k})$ and equation \ref{eq:p} is expressed as:
\begin{equation}
\sum_{k=1}^K \, \frac{\binom{d}{k}}{\big( (2^d) / (2^{d-k}) \big) ^p} = (2^{-p} + 1)^d - 1 = 1
\end{equation}
and has solution:
\begin{equation}
p=-\log_2(2^{1/d}-1)
\end{equation}
thus:
\begin{equation}
\begin{split}
T(n)=\theta \big( n^{-\log_2(2^{1/d}-1)}(1 + \int_{1}^{n}\frac{u}{u^{-\log_2(2^{1/d}-1)+1}}du) \big) = \\
= \theta \big( n^{-\log_2(2^{1/d}-1)}(1 + \Theta(n^{2 + log_2(2^{1/d}-1)})) \big) = \\
= \Theta(n^{-\log_2(2^{1/d}-1)})
\end{split}
\end{equation}


\begin{theorem}
$\Theta(n^{\log_2 d}) < \Theta(n^{-\log_2(2^{1/d}-1)})$, i.e. the time complexity of QHV-II in the considered case is lower than the time complexity of QHV assuming the approximate model.
\end{theorem}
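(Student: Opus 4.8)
The plan is to reduce the stated asymptotic inequality to a purely algebraic inequality between the two exponents, and then to establish that algebraic inequality using the convexity of the exponential. First I would observe that, when comparing two bounds of the form $\Theta(n^a)$ and $\Theta(n^b)$ as $n \to \infty$, the relation $\Theta(n^a) < \Theta(n^b)$ simply means $a < b$, because $n^b$ eventually dominates $n^a$ regardless of constants. Hence it suffices to prove
\begin{equation}
\log_2 d < -\log_2\!\big(2^{1/d}-1\big).
\end{equation}

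Next I would clear the logarithms. Since $2^{1/d} < 2$, the quantity $2^{1/d}-1$ lies in $(0,1)$, so both sides above are well defined and the right-hand side is strictly positive. Moving everything to one side gives $\log_2\!\big(d\,(2^{1/d}-1)\big) < 0$, which, because $\log_2$ is increasing with $\log_2 1 = 0$, is equivalent to
\begin{equation}
d\,(2^{1/d}-1) < 1, \qquad \text{i.e.} \qquad 2^{1/d} - 1 < \tfrac{1}{d}.
\end{equation}
Writing $x = 1/d$, the whole theorem thus collapses to showing $2^{x}-1 < x$ on the relevant range of $x$.

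The key step is then this last inequality. Because $d \geq 2$, we have $x = 1/d \in (0,\tfrac12] \subset (0,1)$, and on this interval I would invoke the strict convexity of the map $x \mapsto 2^{x}$ (its second derivative $2^{x}(\ln 2)^2$ is strictly positive). The chord joining the points $(0,2^{0})=(0,1)$ and $(1,2^{1})=(1,2)$ is the line $y = 1 + x$, and strict convexity forces the graph of $2^{x}$ to lie strictly below this chord for every $x \in (0,1)$. Consequently $2^{x} < 1 + x$, i.e. $2^{x} - 1 < x$, exactly as required. Tracing the chain of equivalences backward then yields the theorem.

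I expect the only subtle point to be the very first reduction: reading the symbol $<$ between two $\Theta$-classes as a comparison of the polynomial exponents. This is legitimate precisely because both exponents are strictly positive constants and the dominating factor is $n^a$ against $n^b$ as $n \to \infty$; one should state this convention explicitly. Everything afterward is a short chain of equivalences plus a one-line convexity bound, so no heavy computation arises — one only has to confirm that the range $x \in (0,\tfrac12]$ stays inside $(0,1)$, where the convexity inequality is strict.
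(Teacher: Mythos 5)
Your proof is correct, and it shares the paper's skeleton up to the last step: the paper likewise reads the theorem as the exponent inequality (its Lemma~1, $\log_2 d < -\log_2(2^{1/d}-1)$) and likewise clears logarithms to reach $d2^{1/d}-d < 1$. Where you genuinely diverge is in how that final inequality is established. The paper invokes a Laurent (Taylor in $1/d$) expansion, $d2^{1/d}-d = \log 2 + \frac{\log^2 2}{2d} + \mathcal{O}((1/d)^2) < 1$, which as written is only an asymptotic statement: the $\mathcal{O}((1/d)^2)$ remainder is never bounded, so strictly speaking the paper's argument covers only sufficiently large $d$, and extra work (e.g.\ summing the full series) would be needed to certify small values such as $d=2,3$. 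Your chord argument closes exactly this gap: strict convexity of $x \mapsto 2^x$ gives $2^x < 1+x$ for all $x \in (0,1)$, hence $2^{1/d}-1 < 1/d$ uniformly for every real $d>1$, with no error term to control. A secondary point in your favor: the paper writes its reduction as a chain of forward implications ($\log_2 d < -\log_2(2^{1/d}-1) \implies \cdots \implies d2^{1/d}-d<1$), which is the wrong logical direction and is salvaged only because every step is in fact an equivalence; you explicitly note that the steps are equivalences and trace them backward, which is the correct structure. In short: same reduction, but your convexity bound is a more elementary and fully rigorous replacement for the paper's series expansion, and it buys validity for all $d \geq 2$ rather than asymptotically.
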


This theorem is the direct consequence of Lemma \ref{lemma1}.

\begin{lemma}
\label{lemma1}
$\log_2 d < -\log_2(2^{1/d}-1)$
\end{lemma}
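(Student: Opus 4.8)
The plan is to reduce the logarithmic inequality to an elementary statement about the secant slope of a convex function. Since $\log_2$ is strictly increasing and $-\log_2(2^{1/d}-1) = \log_2\bigl(1/(2^{1/d}-1)\bigr)$, the claim $\log_2 d < -\log_2(2^{1/d}-1)$ is equivalent to $d < 1/(2^{1/d}-1)$. Because $2^{1/d}-1 > 0$ for every $d \geq 1$, I may clear the denominator without reversing the inequality, so the whole lemma is equivalent to
\begin{equation}
d\,(2^{1/d}-1) < 1.
\end{equation}

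Next I would substitute $x = 1/d$ and rewrite the target as $f(x) < 1$, where
\begin{equation}
f(x) = \frac{2^x - 1}{x}.
\end{equation}
The key observation is that $f(x)$ is precisely the slope of the secant of $\varphi(t)=2^t$ between $t=0$ and $t=x$, since $\varphi(0)=1$ gives $f(x) = (\varphi(x)-\varphi(0))/(x-0)$. Under this substitution $d \geq 2$ corresponds to $x \in (0,\tfrac12]$, and in particular $x \in (0,1)$.

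The main step, and the only nontrivial one, is to show that $f$ is \emph{strictly increasing} on $(0,\infty)$. This follows from the strict convexity of $\varphi(t)=2^t$: for a strictly convex function the slope of the secant from a fixed left endpoint is a strictly increasing function of the right endpoint. I expect the cleanest route is to invoke this standard property directly; alternatively one can verify $f'(x)>0$, which reduces to $x\,2^x \ln 2 > 2^x - 1$, i.e. that the tangent slope at $x$ exceeds the secant slope over $[0,x]$ — again a consequence of convexity. I anticipate this monotonicity claim is where a referee would want the argument spelled out, so I would state the convexity fact explicitly rather than treat it as obvious.

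Finally, with monotonicity in hand, for every $d \geq 2$ we have $x = 1/d \in (0,1)$, hence $f(1/d) < f(1) = (2-1)/1 = 1$, which is exactly $d(2^{1/d}-1) < 1$. Unwinding the equivalences from the first paragraph yields $\log_2 d < -\log_2(2^{1/d}-1)$, as required. I would close by noting that the boundary value $f(1)=1$ shows equality holds at $d=1$, so strictness is genuinely tied to the hypothesis $d \geq 2$ that is inherent to the problem setting.
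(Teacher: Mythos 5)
Your proof is correct, and it takes a genuinely different route from the paper's. Both arguments begin with the same reduction, clearing the logarithms to arrive at $d(2^{1/d}-1)<1$, but from there the paper expands $d\,2^{1/d}-d$ in a Laurent series around $d=\infty$, obtaining $\ln 2 + \tfrac{(\ln 2)^2}{2d} + \mathcal{O}((1/d)^2)$ and asserting this is below $1$. That argument is short but, as written, only airtight for sufficiently large $d$: a truncated expansion with an unquantified $\mathcal{O}((1/d)^2)$ remainder does not by itself give a strict bound for every $d\geq 2$ (one would need to sum the full positive-term series $\sum_{k\geq 1}(\ln 2)^k/(k!\,d^{k-1}) \leq 2^1-1 = 1$, with equality only at $d=1$, to close that gap). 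Your secant-slope argument avoids this entirely: writing $d(2^{1/d}-1) = f(1/d)$ with $f(x)=(2^x-1)/x$ the secant slope of the strictly convex function $t\mapsto 2^t$ from $0$ to $x$, strict monotonicity of $f$ gives $f(1/d)<f(1)=1$ uniformly for all real $d>1$, and it cleanly identifies $d=1$ as the equality case. So your proof buys full rigor at every relevant $d$ with only an elementary convexity fact, whereas the paper's buys brevity at the cost of an asymptotic step; your version is the one a referee would accept without further patching.
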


\begin{proof}
\begin{equation}
\begin{split}
\log_2 d < -\log_2(2^{1/d}-1) \implies \\
\log_2 d + \log_2(2^{1/d}-1) < 0 \implies \\
\log_2 (d2^{1/d}-d) < 0  \implies \\
d2^{1/d}-d < 1
\end{split}
\end{equation}
Using Laurent series extension
\begin{equation}
d2^{1/d}-d = d +\log 2 + \frac{\log^2 2}{2d} + \mathcal{O}((1/d)^2) - d < 1
\end{equation}
$\square$
\end{proof}

\begin{table}[t]
\caption{Comparison of $p^o$ values for the exact and the approximate models}
\begin{center}
\label{tab:p}
\begin{tabular}{ccc}
$d$ & $p^o$ exact model & $p^o$ approximate model \\ \hline
2 & 1	&	1.2715 \\
4 & 2.2942	&	2.4019 \\
6 & 2.9920 & 3.0295 \\
8 & 3.4543 & 3.4658 \\
10 & 3.7971 & 3.8004 \\
12 & 4.0709 & 4,0718 \\
\end{tabular}
\end{center}
\end{table}

\subsection{Constant fraction of the removed points}

In practice some of the projected points are removed. The fraction of the preserved (not removed) projected points depends in general on the number of points and the number of projected objectives. We may consider, however, a simplified model where the fraction of the preserved points is a constant $0 \leq C \leq 1$. Please note, that for $C=0$ we get the best case considered above, and for $C=1$ the intermediate case considered above. In this case:
\begin{equation}
\begin{cases}
b_k = \frac{2^d-2}{2^{d-i} + C(2^{d-1} - 2^{d-i} - 1)}, k= 2,\dots,d \\
b_1 = \frac{2^d-2}{2^{d-i} - 1}
\end{cases}
\end{equation}
for QHV-II, and:
\begin{equation}
b_k = \frac{2^d-2}{1 + C (2^{d-k} - 2)}, k= 1,\dots,d
\end{equation}
for QHV. Unfortunately, we did not manage to solve equation \ref{eq:p} in these cases analytically. Numerical analysis with the use of non-linear solvers suggests that the following hypothesis is true:

\begin{hyp}
\label{h:1}
$p^o$ is lower for QHV-II than for QHV, i.e. the time complexity of QHV-II is lower than the time complexity of QHV in the considered case, for any $C > 0$.
\end{hyp}

Namely, we used a non-linear solver to solve the following optimization problem:
\begin{equation}
\begin{split}
\min (p^o_{QHV}-p^o_{QHV-II}) \\
s.t. \\
\textnormal{equation } \ref{eq:p} \textnormal{ for QHV} \\
\textnormal{equation } \ref{eq:p} \textnormal{ for QHV-II} \\
\end{split}
\end{equation}
with variables $0 \leq C \leq 1, d \geq 2, p^o_{QHV} \geq 1,p^o_{QHV-II} \geq 1$. The optimum solution was always $C = 0$,  $p^o_{QHV} = 1$, $p^o_{QHV-II} = 1$ for any starting solution tested. Furthermore, in Table \ref{tab:p numerical} we present some exemplary values of $p^o$ for various numbers of objectives and values of $C$ obtained by solving equation \ref{eq:p} numerically. Of course, this is still not a formal proof of Hypothesis \ref{h:1}. 

\begin{table}[t]
\caption{Comparison of $p^o$ values for QHV-II and QHV}
\begin{center}
\label{tab:p numerical}
\begin{tabular}{cccc}
$d$ & $C$ & $p^o_{QHV-II}$ & $p^o_{QHV}$ \\ \hline
4	&	0,9	&	1,8635	&	2,0928	\\
4	&	0,5	&	1,4279	&	1,5071	\\
4	&	0,1	&	1,0837	&	1,0957	\\
12	&	0,9	&	3,1897	&	3,6731	\\
12	&	0,5	&	2,0706	&	2,5321	\\
12	&	0,1	&	1,2548	&	1,5445	\\
20	&	0,9	&	3,8092	&	4,3628	\\
20	&	0,5	&	2,3820	&	3,0746	\\
20	&	0,1	&	1,3563	&	1,9701	\\
\end{tabular}
\end{center}
\end{table}

\subsection{Average case with no removal of the dominated points}
In QHV-II each of the $d$ sub-problems corresponds to a split based on the value of one objective in the pivot point which allows us to follow the analysis of well-known algorithms like the binary search or Quicksort. Assuming that the pivot point is selected at random with the uniform probability and that the ranks of a given point according to particular objectives are independent variables:
\begin{equation} \label{eq:a1}
T(n)= n + d \frac{1}{n} \sum_{k=1}^{n-1}T(k) \\
\end{equation}
Multiplying both sides by $n$:
\begin{equation} \label{eq:a2}
nT(n)= n^2 + d \sum_{k=1}^{n-1}T(k) \\
\end{equation}
Assuming that $n \geq 2$:
\begin{equation} \label{eq:a3}
(n-1)T(n-1)= (n-1)^2 + d \sum_{k=1}^{n-2}T(k) \\
\end{equation}
Subtracting equations \ref{eq:a2} and \ref{eq:a3}:
\begin{equation} \label{eq:a4}
\begin{split}
nT(n) - (n-1)T(n-1) = \\
= n^2 - (n-1)^2 + d\sum_{k=1}^{n-1}T(k) - d\sum_{k=1}^{n-2}T(k) 
\end{split}
\end{equation}
Simplifying:
\begin{equation} \label{eq:a5}
T(n) = \frac{2n-1}{n} + \frac{n+d-1}{n}T(n-1)
\end{equation}
The above recurrence has the solutions: 
\begin{equation} \label{eq:c3}
T(n) = c_1\frac{\Gamma(d+n)}{\Gamma(n+1)} - c_2n - c_3
\end{equation}
where:
\begin{equation} \label{eq:c3a}
\begin{split}
c_1 = c\frac{(d-2)(d-1) + d^2}{(d-2)(d-1)\Gamma(d+1)} \\
c_2 = \frac{2}{d-2} \\
c_3 = \frac{d}{(d-1)(d-2)}
\end{split}
\end{equation}
(with $c$ being an arbitrary constant), which can be shown by substituting $T(n)$ and $T(n-1)$ with \ref{eq:c3} in \ref{eq:a5}

Since
\begin{equation} \label{eq:c4}
\lim_{n\to \infty} \frac{\Gamma(n+\alpha)}{\Gamma(n)n^\alpha} = 1
\end{equation}
thus
\begin{equation} \label{eq:c4}
\lim_{n\to \infty} \frac{\Gamma(n+\alpha)}{\Gamma(n)} = n^\alpha
\end{equation}
\begin{equation} \label{eq:c4}
T(n)= \Theta(n^{d-1})
\end{equation}
This analysis was based on the assumption that the pivot point is selected at random while in QHV and QHV-II we select a point with maximum $\mathcal{H}(s, r_*)$. This increases the chance of selecting points with a good balance of the values of all objectives with ranks closer to $n/2$ on each objective which may improve the practical behavior of the algorithms. 

Similar analysis of QHV is much more difficult, we can prove however that under the same assumptions the following relation holds:
\begin{theorem}
$T'(n) > T(n)$, where $T'(n)$ and $T(n)$ are the numbers of comparisons in QHV and QHV-II algorithms, respectively.
\end{theorem}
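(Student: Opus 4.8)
The plan is to derive, under the stated random-pivot assumptions, a recurrence for $T'(n)$ analogous to equation \ref{eq:a1} for QHV-II, and then to compare the two recurrences term by term. First I would set up the QHV recurrence. In QHV the region $H(r^*,r_*)\setminus H(s',r_*)$ is split into $2^d-2$ basic hypercuboids grouped into $d-1$ classes, where class $k$ has $\binom{d}{k}$ sub-problems. With no removal of dominated points, a point lands in a given basic hypercuboid according to its pattern of being $<$ or $\ge s'$ on each objective, and the projected points are redistributed among the boxes. I would write the expected cost contributed at the top level as $n$ (the cost of distributing and comparing the $n$ points against the pivot), plus the sum over all $2^d-2$ basic hypercuboids of $T'$ evaluated at the expected number of points assigned to that box, averaged over the uniform random choice of pivot rank on each coordinate. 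This should produce an expression of the form $T'(n)=n+\sum_{k=1}^{d-1}\binom{d}{k}\,\frac{1}{n}\sum_{\text{ranks}}T'(\cdot)$, structurally heavier than \ref{eq:a1} because the branching factor $2^d-2$ vastly exceeds $d$.

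The key comparison step is then to show termwise domination. The cleanest route is an induction on $n$: assuming $T'(m)>T(m)$ for all $m<n$, I would argue that the top-level additive cost $n$ is identical in both algorithms, and that the recursive contribution for QHV dominates that for QHV-II. The heart of this is a combinatorial/convexity argument: QHV-II splits each axis-based branch so that a sub-problem of rank $k$ (on the first differing objective) receives points from exactly one schema of size $2^{d-k}$, whereas QHV fragments the same mass across $\binom{d}{k}$ boxes each of which additionally absorbs all projected points from the $2^{d-k}-2$ boxes dominated in its own sub-region. Because $T$ and $T'$ are increasing and super-linear (indeed $\Theta(n^{d-1})$ by \ref{eq:c4}), spreading the same total point mass across more, and individually larger (due to projection), sub-problems strictly increases the summed recursive cost; this is exactly where the ``no removal'' hypothesis is used, since without removal every projected point genuinely enters a sub-problem. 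Formally I would invoke superadditivity of $T'$: for a convex increasing $T'$ one has $T'(a)+T'(b)>T'(a+b)$ only in the subadditive direction, so care is needed — the correct lever is that QHV's sub-problems are each no smaller and more numerous than QHV-II's, letting me bound the QHV recursive sum below by the QHV-II recursive sum and then strictly above using the extra $\binom{d}{k}$ versus $d$ branch count and the induction hypothesis $T'(m)>T(m)$.

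The main obstacle I anticipate is making the ``more and larger sub-problems cost strictly more'' argument rigorous rather than heuristic, since the expected sub-problem sizes in QHV and QHV-II do not line up as a clean one-to-one refinement: the random pivot induces different rank distributions, and the projected points change the effective argument of $T'$ in the two schemes. I would handle this by first establishing the base case (small $n$, where the simple-geometry leaf cost is identical) and a monotonicity lemma stating $T'$ is strictly increasing, then reducing the inductive step to a single inequality between the two weighted sums of $T'$-values; the strict inequality should follow from the strict gap already present at the top recursion level (QHV branches into $2^d-2>d$ pieces) combined with $d\ge 3$. The delicate point is ensuring that the loss from possibly smaller individual arguments in some QHV boxes never overcompensates the gain from the larger branching factor, which is ultimately guaranteed because the total assigned point mass is strictly larger in QHV (every projected point is counted in every box it projects into, with multiplicity growing in $d$) while in QHV-II the $d$ schemata partition the non-dominated region without this multiplicative overcounting.
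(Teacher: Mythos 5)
Your overall direction (QHV's recursion strictly contains more work than QHV-II's) is right, but the argument you build around it has a genuine gap, and it is exactly the one you flag yourself. The comparison of the two weighted sums of recursive costs cannot be closed by convexity/superadditivity or by counting ``total assigned point mass'': for a superlinear cost function, spreading a given mass over \emph{more} sub-problems can \emph{decrease} the summed cost (this is the subadditive direction you correctly worry about), so the larger branching factor $2^d-2 > d$ and the multiplicity of projected points do not by themselves yield the inequality. Worse, your key structural claim --- that QHV's sub-problems are ``each no smaller and more numerous'' than QHV-II's --- is false: a basic hypercuboid of QHV whose binary vector has $k \geq 2$ ones receives (even with no removal) only the points that are $\geq$ the pivot on \emph{all} $k$ of those objectives, which is a strict subset of the point set of the QHV-II sub-problem indexed by the first such objective. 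So the inductive step, as sketched, does not go through.

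What closes the proof is a much more elementary embedding observation, which is the paper's actual argument. Under the no-removal assumption, consider the $d$ basic hypercuboids of QHV whose binary vectors have exactly one $1$, at position $j$. Every point with $s_j \geq s'_j$ either lies in that hypercuboid or is projected onto it, so this QHV sub-problem receives exactly the set $\{s \in S : s_j \geq s'_j\}$ --- which is precisely the point set of the $j$-th sub-problem of QHV-II. Hence these $d$ sub-problems of QHV are equivalent to the $d$ sub-problems of QHV-II, and
\begin{equation}
T'(n) = n + d\,\frac{1}{n}\sum_{k=1}^{n-1} T(k) + T''(n) = T(n) + T''(n) > T(n),
\end{equation}
where $T''(n) > 0$ is the cost of processing the remaining $2^d-2-d$ sub-problems. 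No induction, convexity, or comparison of rank distributions is needed; the only thing to verify is the identification of the point sets, which is immediate from how points are assigned (every point dominating the sub-hypercuboid's lower corner is kept, projected if necessary). The lesson is that rather than comparing the two recursions in aggregate, you should look for QHV-II's recursion appearing \emph{verbatim} as a sub-sum inside QHV's.
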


\begin{proof}
In QHV each node is split into $2^d-2$ sub-problems. Among them are the $d$ sub-problems corresponding to the $d$ basic hypercuboids with only one objective better or equal to the corresponding value in the pivot point. Consider one of such basic hypercuboids better or equal to the corresponding value in the pivot point only on objective $j$. All points with equal or better values on objective $j$ either belong to or are projected on this hypercuboid. Thus, each these $d$ sub-problems is equivalent to one of the $d$ sub-problems used in QHV-II. In result:
\begin{equation}
\begin{split}
T'(n)= n + d \frac{1}{n} \sum_{k=1}^{n-1}T(k) + T''(n) = \\
= T(n) + T''(n) > T(n)
\end{split}
\end{equation}
where $T''(n)$ is the number of comparisons needed to process the remaining sub-problems other than the $d$ sub-problems discussed above.
$\square$
\end{proof}

\subsection{Discussion}
We have shown that the worst case time complexity of QHV-II is better than of QHV. On the other hand, the best case time complexity of the two algorithms is equal. We have analyzed also a number of intermediate cases each time showing that the time complexity of QHV-II is better than of QHV which supports the hypothesis that in practical cases QHV-II performs better than QHV.




\section{Computational experiment}

In order to test the proposed method we used the test instances proposed in \cite{Lacour2017347}. The set of instances includes linear, concave, convex, and hard instances with up to 10 objectives and up to 1000 points. In addition we used also instances with points uniformly distributed over a multi-dimensional spherical surface generated by us in a way proposed by Russo and Francisco \cite{Russo2014}. Among the hard instances there is just one instance of each size. In other cases, there are 10 instances of each size. The presented results are the average values over 10 instances of each type and size in all cases with the exception of the hard instances. All test instances used in this experiment, as well as the source code and the detailed numerical results, are available at https://sites.google.com/view/qhv-ii/qhv-ii.

The main goal of the computational experiment is to show that QHV-II performs less operations than QHV. Please note, that our goal is not to present a very efficient code for calculation of the hypervolume. We have developed only a simple implementation of QHV-II in C++. In the same programming language we have implemented also QHV algorithm. Both implementations share majority of the code. We compare it also to the implementation of QHV available at http://web.tecnico.ulisboa.pt/luis.russo/QHV/ developed by the authors of the algorithm. We call this implementation oQHV (the original implementation of QHV). oQHV implementation is much faster than our implementation of QHV (and thus of QHV-II) because of several reasons:
\begin{itemize}
\item oQHV is implemented in C and the code is highly optimized for efficiency. Our implementation is made in C++ and is relatively simple. In particular, we do not use any low level code optimizations used in oQHV, and we use object-oriented techniques, e.g. classes from the STL library, which are very convenient, but can make the code less efficient.
\item oQHV uses additional methods, namely HSO\cite{While2006} and IEX \cite{Wu2001IEX} for the sub-problems with small numbers of points, because these methods are faster in such cases. We do not use these additional methods in our implementation.
\end{itemize}


To make sure that our understanding of QHV is correct we modified the code of oQHV. We call this modified implementation oQGHV-s (oQHV simplified). In oQGHV-s we do not use the additional methods (HSO and IEX) for the sub-problems with small numbers of points. In other words, oQGHV-s implements the same algorithm as our implementation of QHV. We have compared the numbers of visited nodes in the recursive tree in both our implementation of QHV and oQHV-s. The numbers of visited nodes were the same for smaller instances and differed slightly (only in once case above 0.1\%) for larger instances. These small differences where probably caused by small numerical perturbations since in both implementations the objective values are stored as the floating point numbers. On the other hand the running times of our implementation of QHV were on average 3 times higher due to the differences in the programming languages and code optimization. Of course, we have also checked that our implementations of QHV-II and QHV returns the same values of the hypervolume that the other tested algorithms.

The numbers of visited nodes in the case of QHV-II and oQHV are not directly comparable since oQHV cuts the recursive tree at some levels and applies the additional methods (HSO or IEX) for the small sub-problems. To make a fair comparison in Figures \ref{fig:opsL} to \ref{fig:opsUS} we compare the original implementation of QHV and our implementation of QHV-II with the recursive trees cut in both cases at the level at which the original implementation of QHV would use the additional methods. In other words, the sub-problems for which the original implementation of QHV would solve with the additional methods were completely skipped. We call these methods oQHV-cut and QHV-II-cut. We report the numbers of visited (internal) nodes and the number of leafs, i.e. the number of calls to the additional methods (HSO or IEX). The number of visited nodes was on average 2,21 (33,5 maximum) times higher in oQHV-cut and the number of leafs was on average 5,39 (46 maximum) times higher. These numbers were almost always lower for QHV-II-cut expect of 4 hard instances with 6 objectives were the numbers of visited nodes were slightly higher for QHV-II-cut. The relative differences were generally higher for instances with higher number of objectives. These results confirm that QHV-II constructs smaller recursive trees and allows to predict that a more advanced implementation of QHV-II, i.e. with a more optimized code and with the use of the additional methods for small sub-problems, could be several times faster than the original implementation of QHV for the tested instances. 

Please note also, that QHV-II behaves much more predictably than QHV on hard instances. The number of visited nodes and leafs in QHV-II grows gradually with the growing number of points, while in QHV it is sometimes higher for smaller sets.

Although our goal is not to develop a very efficient implementation of QHV-II our simple code is already competitive to the state-of-the-art implementations in some cases. In Figures \ref{fig:CPUL} to \ref{fig:CPUUS} we present running times of our implementation of QHV-II, the original implementation of QHV (oQHV), as well as of the non-incremental version of HBDA (HBDA-NI) \cite{Lacour2017347} and WFG \cite{While2102} algorithms. oQHV and QHV-II were run by ourselves on Intel Core i7-5500U CPU at 2.4 GHz. For HBDA-NI and WFG, in the case of the linear, concave, convex, and hard instances, we used results obtained in \cite{Lacour2017347}. Since a slower CPU was used in that experiment we re-scaled the running times by a factor of 2,5 obtained by comparing the running times in a number of test runs. For uniform spherical instances, we run ourselves WFG algorithm using the code available at http://www.wfg.csse.uwa.edu.au/hypervolume/. Unfortunately, because of some technical problems, we were not able run the code of HBDA-NI, thus for these instances this method was not used. These results confirm that even our simple implementation of QHV-II is already competitive to the state-of-the-art codes in some cases, especially for hard instances with 10 objectives. Let us remind, that the results presented above suggest that the implementation of QHV-II could be further significantly improved by adding a low level code optimization, and using the additional methods for the small sub-problems.

\begin{figure}[htb]
\centering
\hspace*{1.5em}\raisebox{\dimexpr-.5\height-1em}
  {$d=6$}%
\hspace*{13.5em}\raisebox{\dimexpr-.5\height-1em}
  {$d=8$}%
\\[\medskipamount]
\hspace*{0.5em}\raisebox{\dimexpr-.5\height-1em}
  {\includegraphics[scale=0.58]{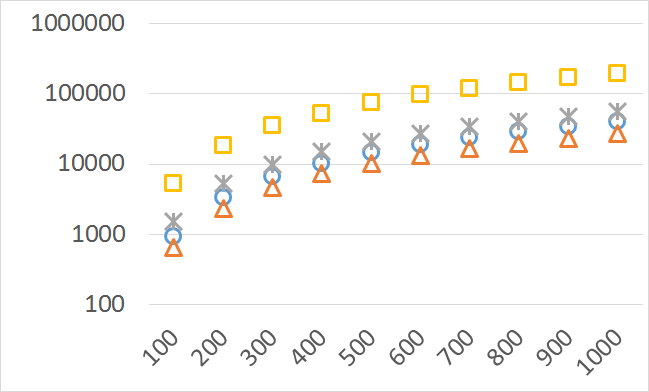}}%
\hspace*{0.2em}\raisebox{\dimexpr-.5\height-1em}
  {\includegraphics[scale=0.58]{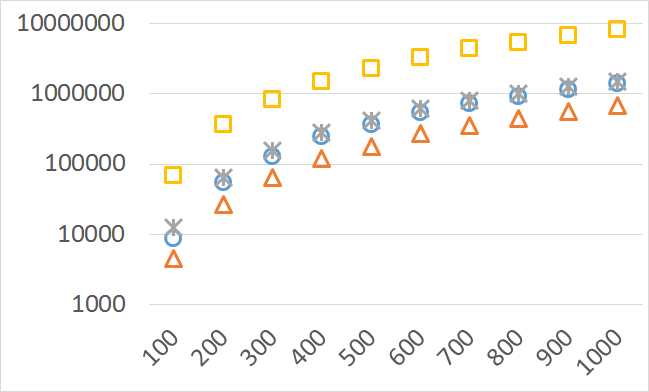}}%
\\[\medskipamount]
\hspace*{1.5em}\raisebox{\dimexpr-.5\height-1em}
  {$d=10$}%
\hspace*{15.5em}\raisebox{\dimexpr-.5\height-1em}
  {$   $}%
\\[\medskipamount]
\hspace*{0.5em}\raisebox{\dimexpr-.5\height-1em}
  {\includegraphics[scale=0.58]{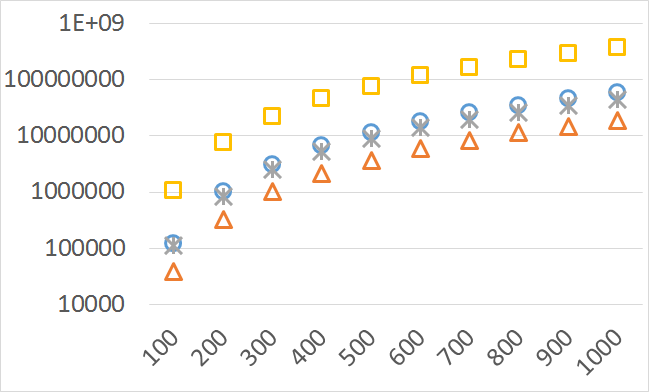}}%
\hspace*{0.2em}\raisebox{\dimexpr-.5\height-1em}
  {\includegraphics[scale=0.58]{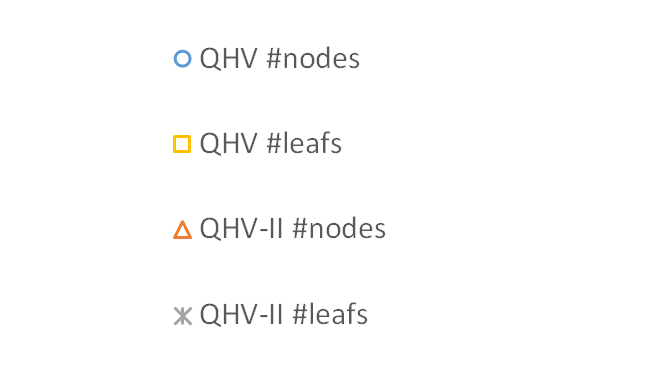}}%

\leavevmode\smash{\makebox[0pt]{\hspace{0em}
  \rotatebox[origin=l]{90}{\hspace{4em}
    Number of operations}%
}}\hspace{0pt plus 1filll}\null

Number of points

\medskip

\caption{Number of nodes and leafs in QHV-cut and QHV-II-cut for the linear instances}
\label{fig:opsL}
\end{figure}

\begin{figure}[htb]
\centering
\hspace*{1.5em}\raisebox{\dimexpr-.5\height-1em}
  {$d=6$}%
\hspace*{13.5em}\raisebox{\dimexpr-.5\height-1em}
  {$d=8$}%
\\[\medskipamount]
\hspace*{0.5em}\raisebox{\dimexpr-.5\height-1em}
  {\includegraphics[scale=0.58]{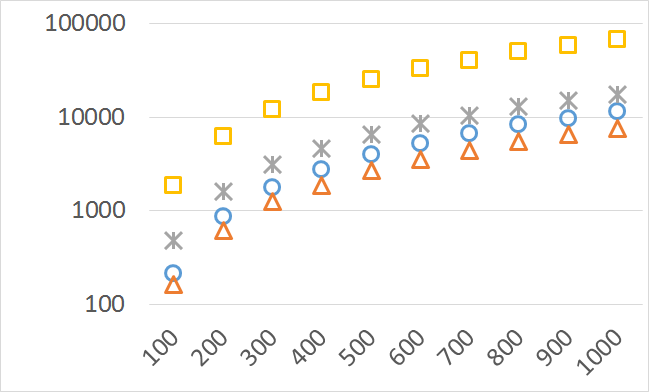}}%
\hspace*{0.2em}\raisebox{\dimexpr-.5\height-1em}
  {\includegraphics[scale=0.58]{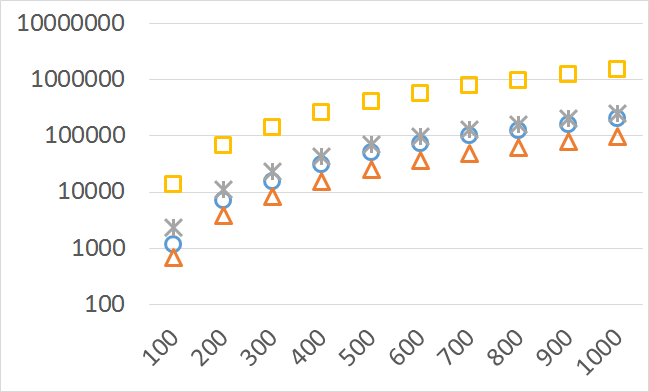}}%
\\[\medskipamount]
\hspace*{1.5em}\raisebox{\dimexpr-.5\height-1em}
  {$d=10$}%
\hspace*{15.5em}\raisebox{\dimexpr-.5\height-1em}
  {$   $}%
\\[\medskipamount]
\hspace*{0.5em}\raisebox{\dimexpr-.5\height-1em}
  {\includegraphics[scale=0.58]{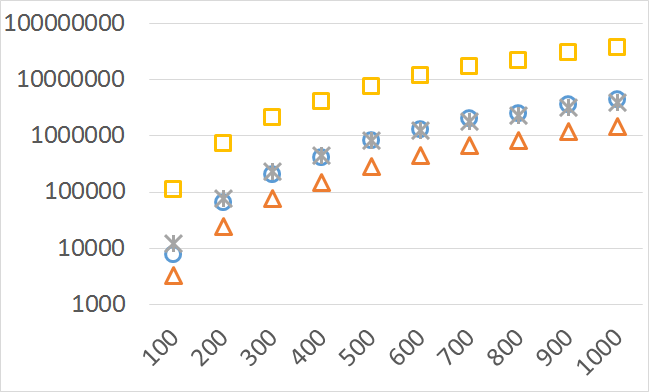}}%
\hspace*{0.2em}\raisebox{\dimexpr-.5\height-1em}
  {\includegraphics[scale=0.58]{opsLegend.png}}%

\leavevmode\smash{\makebox[0pt]{\hspace{0em}
  \rotatebox[origin=l]{90}{\hspace{4em}
    Number of operations}%
}}\hspace{0pt plus 1filll}\null

Number of points

\medskip

\caption{Number of nodes and leafs in QHV-cut and QHV-II-cut for the convex instances}
\label{fig:opsCX}
\end{figure}

\begin{figure}[htb]
\centering
\hspace*{1.5em}\raisebox{\dimexpr-.5\height-1em}
  {$d=6$}%
\hspace*{13.5em}\raisebox{\dimexpr-.5\height-1em}
  {$d=8$}%
\\[\medskipamount]
\hspace*{0.5em}\raisebox{\dimexpr-.5\height-1em}
  {\includegraphics[scale=0.58]{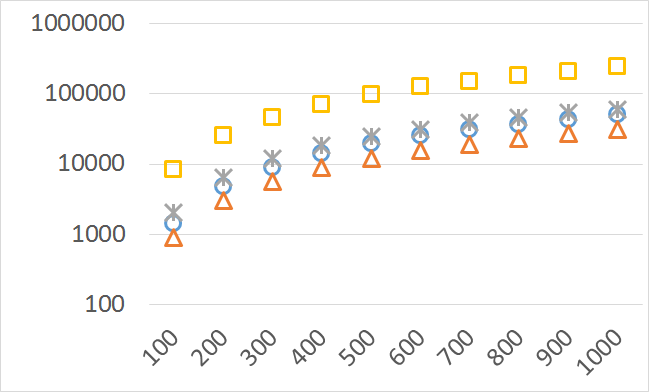}}%
\hspace*{0.2em}\raisebox{\dimexpr-.5\height-1em}
  {\includegraphics[scale=0.58]{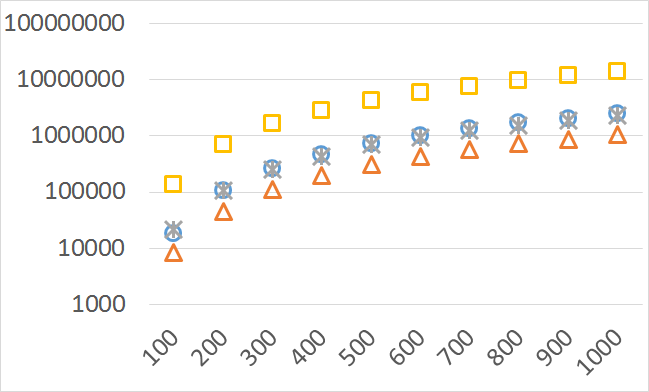}}%
\\[\medskipamount]
\hspace*{1.5em}\raisebox{\dimexpr-.5\height-1em}
  {$d=10$}%
\hspace*{15.5em}\raisebox{\dimexpr-.5\height-1em}
  {$   $}%
\\[\medskipamount]
\hspace*{0.5em}\raisebox{\dimexpr-.5\height-1em}
  {\includegraphics[scale=0.58]{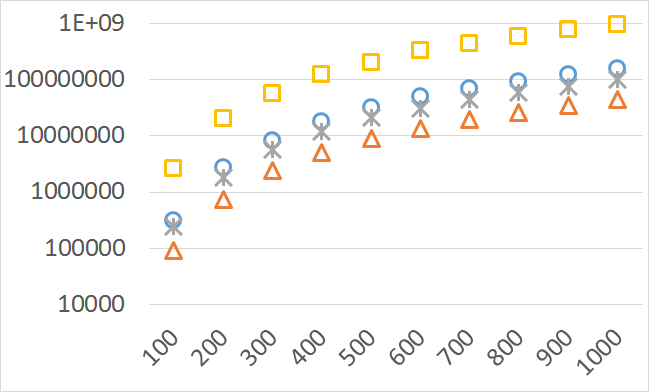}}%
\hspace*{0.2em}\raisebox{\dimexpr-.5\height-1em}
  {\includegraphics[scale=0.58]{opsLegend.png}}%

\leavevmode\smash{\makebox[0pt]{\hspace{0em}
  \rotatebox[origin=l]{90}{\hspace{4em}
    Number of operations}%
}}\hspace{0pt plus 1filll}\null

Number of points

\medskip

\caption{Number of nodes and leafs in QHV-cut and QHV-II-cut for the concave instances}
\label{fig:opsC}
\end{figure}

\begin{figure}[htb]
\centering
\hspace*{1.5em}\raisebox{\dimexpr-.5\height-1em}
  {$d=6$}%
\hspace*{13.5em}\raisebox{\dimexpr-.5\height-1em}
  {$d=8$}%
\\[\medskipamount]
\hspace*{0.5em}\raisebox{\dimexpr-.5\height-1em}
  {\includegraphics[scale=0.58]{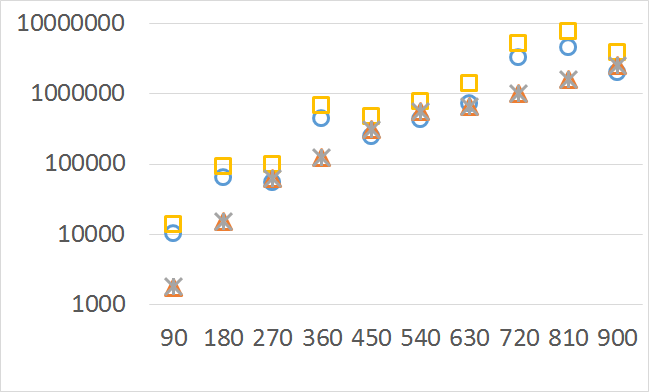}}%
\hspace*{0.2em}\raisebox{\dimexpr-.5\height-1em}
  {\includegraphics[scale=0.58]{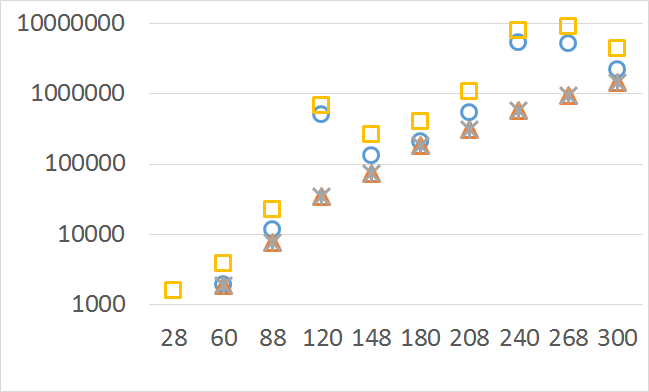}}%
\\[\medskipamount]
\hspace*{1.5em}\raisebox{\dimexpr-.5\height-1em}
  {$d=10$}%
\hspace*{15.5em}\raisebox{\dimexpr-.5\height-1em}
  {$   $}%
\\[\medskipamount]
\hspace*{0.5em}\raisebox{\dimexpr-.5\height-1em}
  {\includegraphics[scale=0.58]{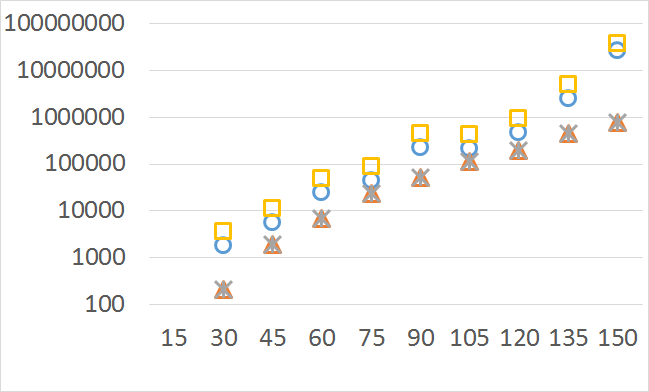}}%
\hspace*{0.2em}\raisebox{\dimexpr-.5\height-1em}
  {\includegraphics[scale=0.58]{opsLegend.png}}%

\leavevmode\smash{\makebox[0pt]{\hspace{0em}
  \rotatebox[origin=l]{90}{\hspace{4em}
    Number of operations}%
}}\hspace{0pt plus 1filll}\null

Number of points

\medskip

\caption{Number of nodes and leafs in QHV-cut and QHV-II-cut for the hard instances}
\label{fig:opsH}
\end{figure}

\begin{figure}[htb]
\centering
\hspace*{1.5em}\raisebox{\dimexpr-.5\height-1em}
  {$d=6$}%
\hspace*{13.5em}\raisebox{\dimexpr-.5\height-1em}
  {$d=8$}%
\\[\medskipamount]
\hspace*{0.5em}\raisebox{\dimexpr-.5\height-1em}
  {\includegraphics[scale=0.58]{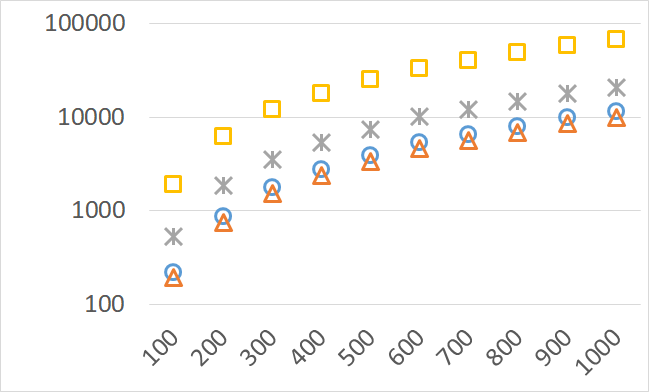}}%
\hspace*{0.2em}\raisebox{\dimexpr-.5\height-1em}
  {\includegraphics[scale=0.58]{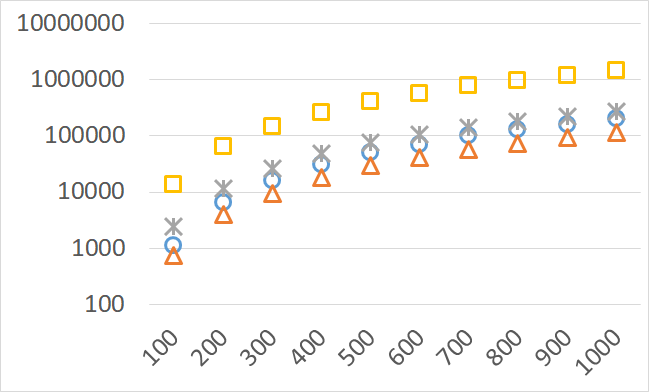}}%
\\[\medskipamount]
\hspace*{1.5em}\raisebox{\dimexpr-.5\height-1em}
  {$d=10$}%
\hspace*{15.5em}\raisebox{\dimexpr-.5\height-1em}
  {$   $}%
\\[\medskipamount]
\hspace*{0.5em}\raisebox{\dimexpr-.5\height-1em}
  {\includegraphics[scale=0.58]{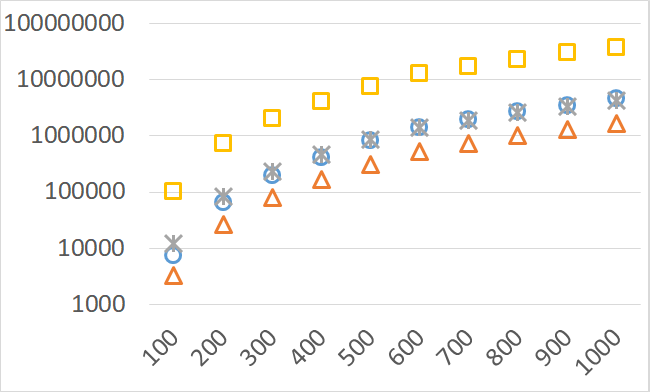}}%
\hspace*{0.2em}\raisebox{\dimexpr-.5\height-1em}
  {\includegraphics[scale=0.58]{opsLegend.png}}%

\leavevmode\smash{\makebox[0pt]{\hspace{0em}
  \rotatebox[origin=l]{90}{\hspace{4em}
    Number of operations}%
}}\hspace{0pt plus 1filll}\null

Number of points

\medskip

\caption{Number of nodes and leafs in QHV-cut and QHV-II-cut for the uniform spherical instances}
\label{fig:opsUS}
\end{figure}

\begin{figure}[htb]
\centering
\hspace*{1.5em}\raisebox{\dimexpr-.5\height-1em}
  {$d=6$}%
\hspace*{13.5em}\raisebox{\dimexpr-.5\height-1em}
  {$d=8$}%
\\[\medskipamount]
\hspace*{0.5em}\raisebox{\dimexpr-.5\height-1em}
  {\includegraphics[scale=0.58]{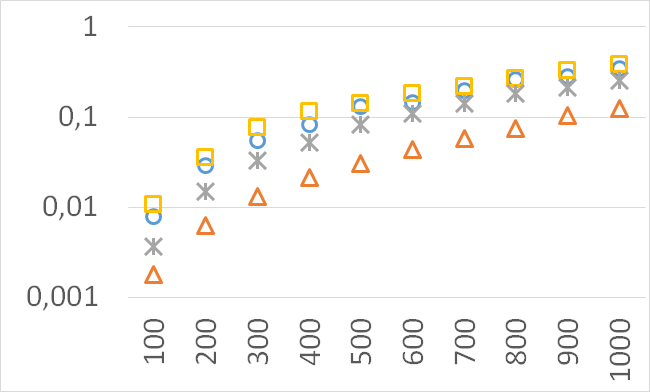}}%
\hspace*{0.2em}\raisebox{\dimexpr-.5\height-1em}
  {\includegraphics[scale=0.58]{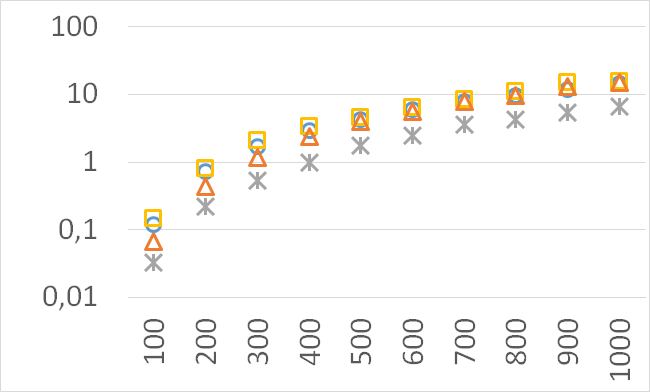}}%
\\[\medskipamount]
\hspace*{1.5em}\raisebox{\dimexpr-.5\height-1em}
  {$d=10$}%
\hspace*{15.5em}\raisebox{\dimexpr-.5\height-1em}
  {$   $}%
\\[\medskipamount]
\hspace*{0.5em}\raisebox{\dimexpr-.5\height-1em}
  {\includegraphics[scale=0.58]{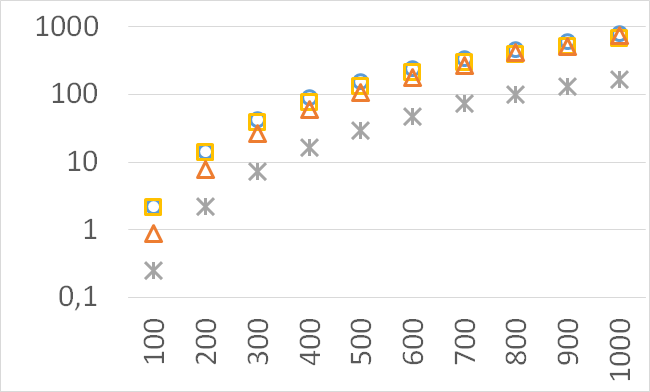}}%
\hspace*{0.2em}\raisebox{\dimexpr-.5\height-1em}
  {\includegraphics[scale=0.58]{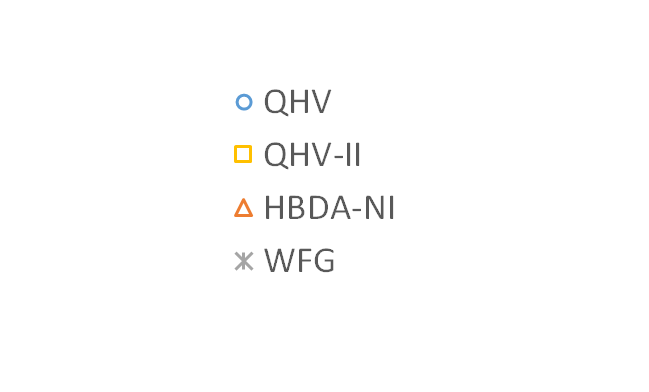}}%

\leavevmode\smash{\makebox[0pt]{\hspace{0em}
  \rotatebox[origin=l]{90}{\hspace{8em}
    CPU time [s]}%
}}\hspace{0pt plus 1filll}\null

Number of points
\medskip
\caption{Running times for the linear instances}
\label{fig:CPUL}
\end{figure}

\begin{figure}[htb]
\centering
\hspace*{1.5em}\raisebox{\dimexpr-.5\height-1em}
  {$d=6$}%
\hspace*{13.5em}\raisebox{\dimexpr-.5\height-1em}
  {$d=8$}%
\\[\medskipamount]
\hspace*{0.5em}\raisebox{\dimexpr-.5\height-1em}
  {\includegraphics[scale=0.58]{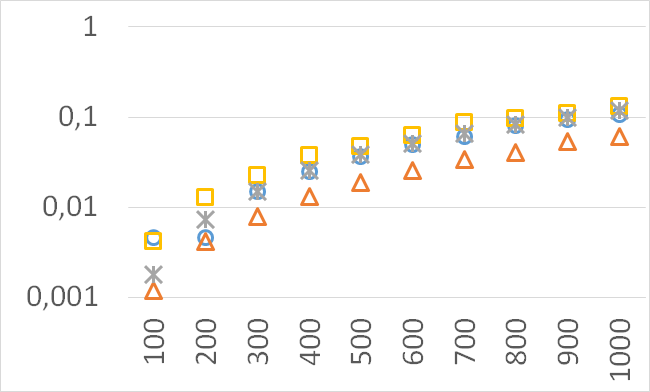}}%
\hspace*{0.2em}\raisebox{\dimexpr-.5\height-1em}
  {\includegraphics[scale=0.58]{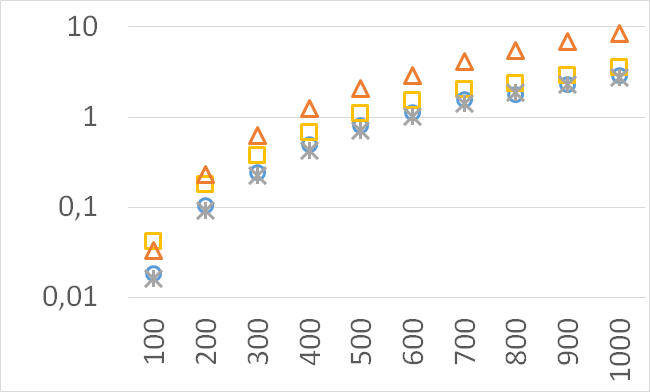}}%
\\[\medskipamount]
\hspace*{1.5em}\raisebox{\dimexpr-.5\height-1em}
  {$d=10$}%
\hspace*{15.5em}\raisebox{\dimexpr-.5\height-1em}
  {$   $}%
\\[\medskipamount]
\hspace*{0.5em}\raisebox{\dimexpr-.5\height-1em}
  {\includegraphics[scale=0.58]{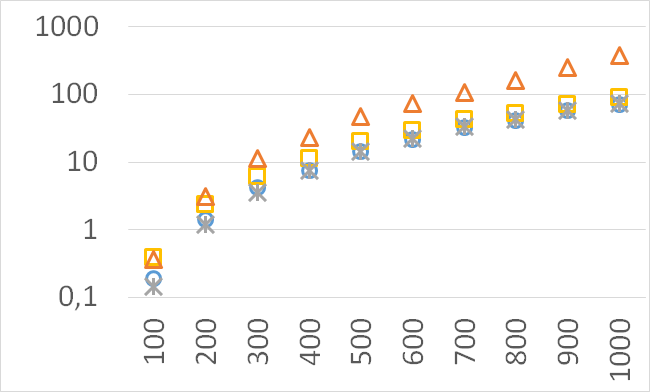}}%
\hspace*{0.2em}\raisebox{\dimexpr-.5\height-1em}
  {\includegraphics[scale=0.58]{CPULegend.png}}%

\leavevmode\smash{\makebox[0pt]{\hspace{0em}
  \rotatebox[origin=l]{90}{\hspace{8em}
    CPU time [s]}%
}}\hspace{0pt plus 1filll}\null

Number of points
\medskip
\caption{Running times for the convex instances}
\label{fig:CPUCX}
\end{figure}

\begin{figure}[htb]
\centering
\hspace*{1.5em}\raisebox{\dimexpr-.5\height-1em}
  {$d=6$}%
\hspace*{13.5em}\raisebox{\dimexpr-.5\height-1em}
  {$d=8$}%
\\[\medskipamount]
\hspace*{0.5em}\raisebox{\dimexpr-.5\height-1em}
  {\includegraphics[scale=0.58]{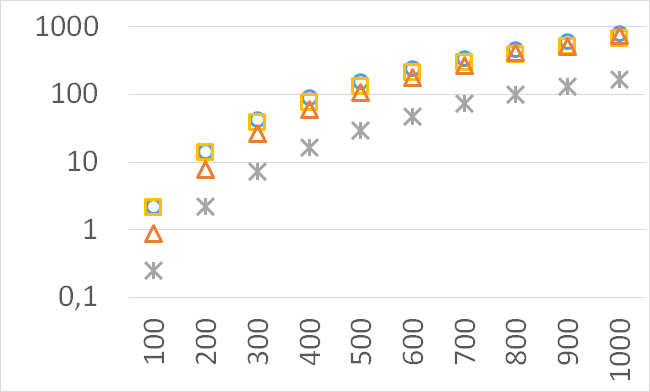}}%
\hspace*{0.2em}\raisebox{\dimexpr-.5\height-1em}
  {\includegraphics[scale=0.58]{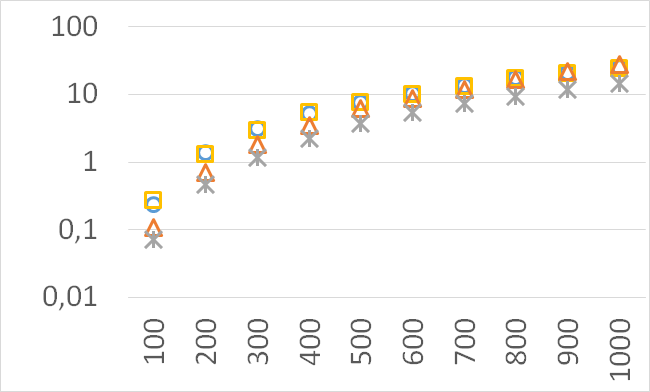}}%
\\[\medskipamount]
\hspace*{1.5em}\raisebox{\dimexpr-.5\height-1em}
  {$d=10$}%
\hspace*{15.5em}\raisebox{\dimexpr-.5\height-1em}
  {$   $}%
\\[\medskipamount]
\hspace*{0.5em}\raisebox{\dimexpr-.5\height-1em}
  {\includegraphics[scale=0.58]{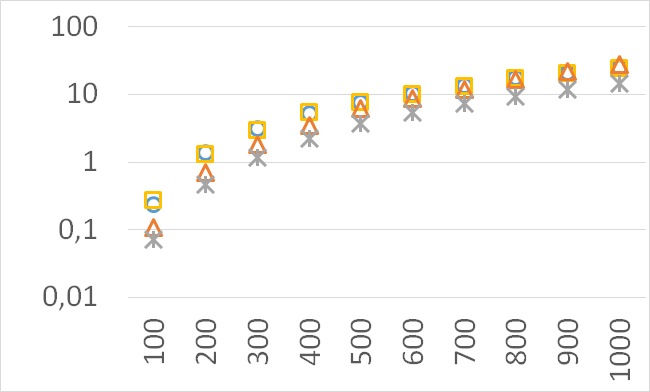}}%
\hspace*{0.2em}\raisebox{\dimexpr-.5\height-1em}
  {\includegraphics[scale=0.58]{CPULegend.png}}%

\leavevmode\smash{\makebox[0pt]{\hspace{0em}
  \rotatebox[origin=l]{90}{\hspace{8em}
    CPU time [s]}%
}}\hspace{0pt plus 1filll}\null

Number of points
\medskip
\caption{Running times for the concave instances}
\label{fig:CPUC}
\end{figure}

\begin{figure}[htb]
\centering
\hspace*{1.5em}\raisebox{\dimexpr-.5\height-1em}
  {$d=6$}%
\hspace*{13.5em}\raisebox{\dimexpr-.5\height-1em}
  {$d=8$}%
\\[\medskipamount]
\hspace*{0.5em}\raisebox{\dimexpr-.5\height-1em}
  {\includegraphics[scale=0.58]{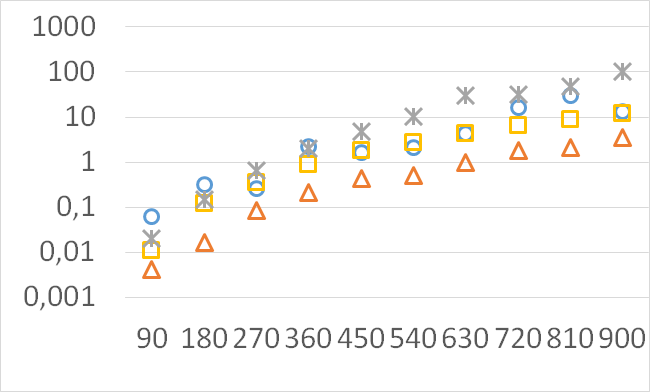}}%
\hspace*{0.2em}\raisebox{\dimexpr-.5\height-1em}
  {\includegraphics[scale=0.58]{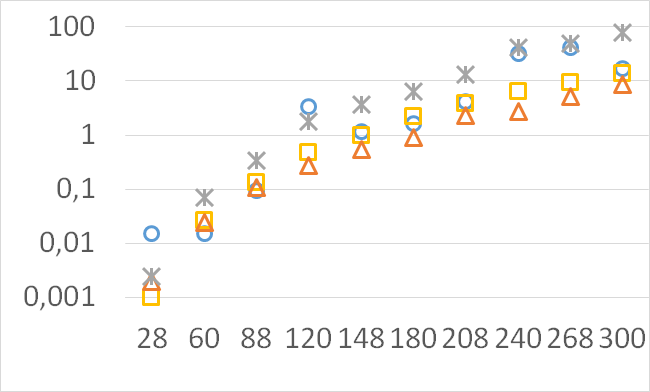}}%
\\[\medskipamount]
\hspace*{1.5em}\raisebox{\dimexpr-.5\height-1em}
  {$d=10$}%
\hspace*{15.5em}\raisebox{\dimexpr-.5\height-1em}
  {$   $}%
\\[\medskipamount]
\hspace*{0.5em}\raisebox{\dimexpr-.5\height-1em}
  {\includegraphics[scale=0.58]{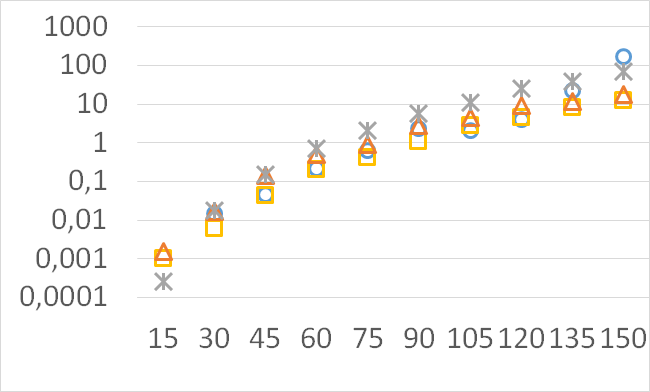}}%
\hspace*{0.2em}\raisebox{\dimexpr-.5\height-1em}
  {\includegraphics[scale=0.58]{CPULegend.png}}%

\leavevmode\smash{\makebox[0pt]{\hspace{0em}
  \rotatebox[origin=l]{90}{\hspace{8em}
    CPU time [s]}%
}}\hspace{0pt plus 1filll}\null

Number of points
\medskip
\caption{Running times for the hard instances}
\label{fig:CPUH}
\end{figure}

\begin{figure}[htb]
\centering
\hspace*{1.5em}\raisebox{\dimexpr-.5\height-1em}
  {$d=6$}%
\hspace*{13.5em}\raisebox{\dimexpr-.5\height-1em}
  {$d=8$}%
\\[\medskipamount]
\hspace*{0.5em}\raisebox{\dimexpr-.5\height-1em}
  {\includegraphics[scale=0.58]{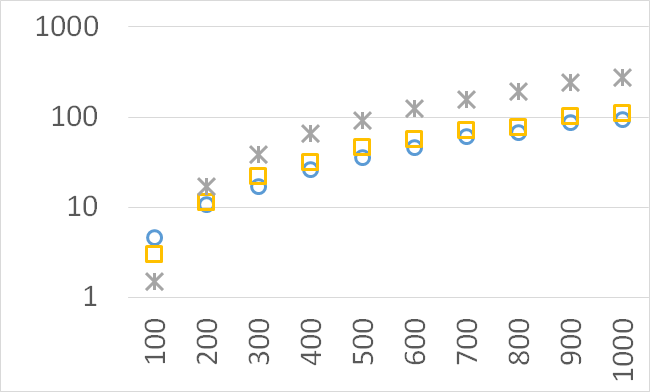}}%
\hspace*{0.2em}\raisebox{\dimexpr-.5\height-1em}
  {\includegraphics[scale=0.58]{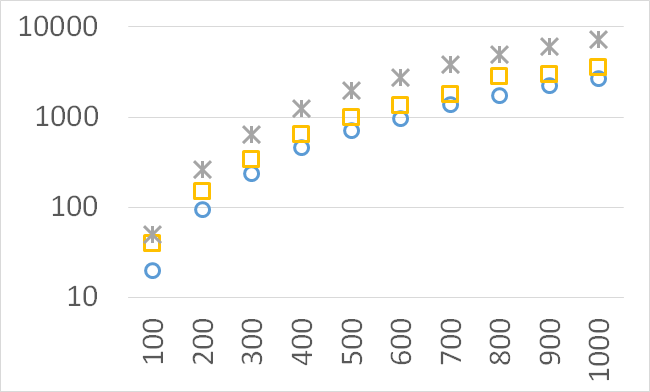}}%
\\[\medskipamount]
\hspace*{1.5em}\raisebox{\dimexpr-.5\height-1em}
  {$d=10$}%
\hspace*{15.5em}\raisebox{\dimexpr-.5\height-1em}
  {$   $}%
\\[\medskipamount]
\hspace*{0.5em}\raisebox{\dimexpr-.5\height-1em}
  {\includegraphics[scale=0.58]{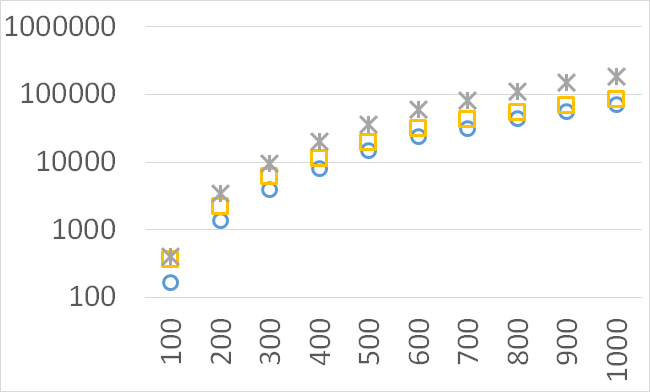}}%
\hspace*{0.2em}\raisebox{\dimexpr-.5\height-1em}
  {\includegraphics[scale=0.58]{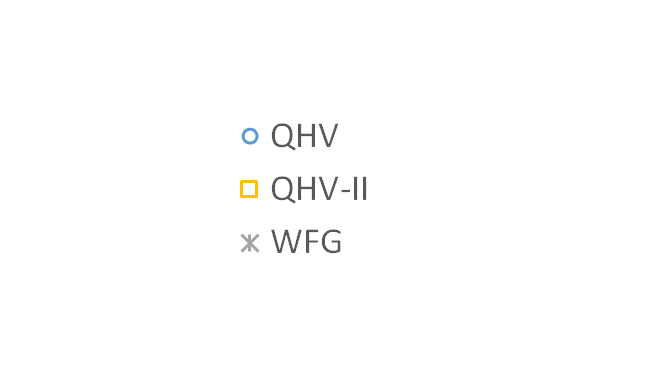}}%

\leavevmode\smash{\makebox[0pt]{\hspace{0em}
  \rotatebox[origin=l]{90}{\hspace{8em}
    CPU time [s]}%
}}\hspace{0pt plus 1filll}\null

Number of points
\medskip
\caption{Running times for the uniform spherical instances}
\label{fig:CPUUS}
\end{figure}

\section{Conclusions and Further Work}
We have presented a modified version of QHV divide and conquer algorithm for calculating the exact hypervolume. Namely, we have modified the scheme of splitting the original problem into smaller sub-problems. Through both theoretical analysis and computational experiments we have shown that the modified version constructs smaller recursive trees and reduces the CPU time.

We have not developed a very efficient implementation of QHV-II but we believe that the we have provided a sufficient evidence that an efficient implementation of QHV-II (i.e. implementation comparable to the efficient implementation of QHV - oQHV) would run several times faster than oQHV for the considered instances.

Interesting directions for further research are:
\begin{itemize}
\item The use of the concepts of QHV-II for the incremental calculation of the hypervolume, i.e. the calculation of the change of the hypervolume after adding new point(s). Similar adaptation of the original QHV algorithm has been proposed in \cite{Russo2016}. We can reasonably expect that the splitting scheme resulting in a faster static algorithm would also reduce the time of the incremental calculation of the hypervolume.  
\item Parallelization of QHV-II which may further improve practical running times. This issue was also considered in the case of the original QHV in \cite{Russo2016}.
\item Adaptation of QHV-II for the approximate calculation of the hypervolume with a guaranteed maximum approximation error.
\end{itemize}

\section*{Acknowledgment}

The research of Andrzej Jaszkiewicz was funded by the the Polish National Science Center, grant no.~UMO-2013/11/B/ST6/01075.

We would like to thank Renaud Lacour, Kathrin Klamroth, and Carlos M. Fonseca for providing us detailed results of the computational experiment reported in \cite{Lacour2017347}.





\section*{References}
\bibliographystyle{model1-num-names}
\bibliography{sample.bib}







\end{document}